\title[Learn to Expect the Unexpected]{Learn to Expect the Unexpected:\\Probably Approximately Correct Domain Generalization}
\newtheorem{observation}[theorem]{Observation}
\newtheorem{theorem}{Theorem}
\newtheorem{lemma}[theorem]{Lemma}
\newtheorem{definition}[theorem]{Definition}
\DeclareMathOperator{\poly}{poly}
\newcommand\eps{\epsilon}
\newcommand\cA{\mathcal{A}}
\newcommand\cC{\mathcal{C}}
\newcommand\cD{\mathcal{D}}
\newcommand\cH{\mathcal{H}}
\newcommand\cL{\mathcal{L}}
\newcommand\cP{\mathcal{P}}
\newcommand\cT{\mathcal{T}}
\newcommand\cX{\mathcal{X}}
\newcommand\cY{\mathcal{Y}}
\newcommand\cZ{\mathcal{Z}}
\newcommand{\MDM}{\mathrm{MDM}}
\newcommand\reals{\mathbb{R}}
\newcommand\err{\mathrm{err}}
\newcommand\noiseless{\mathcal{P}_{shh}}
\newcommand\thresh{\mathrm{thresh}}
\newcommand\iid{iid}
\newcommand\train{\text{Tr}}
\newcommand\val{\text{Val}}
\newcommand{\E}{\mathop{\mathbb{E}}}
\DeclareMathOperator*{\argmin}{\mathrm{argmin}}
\begin{document}

\maketitle

\begin{abstract}
    Domain generalization is the problem of machine learning when the training data and the test data come from different data domains. We present a simple theoretical model of learning to generalize across domains in which there is a meta-distribution over data distributions, and those data distributions may even have different supports. 
    In our model, the training data given to a learning algorithm consists of multiple datasets each from a single domain drawn in turn from the meta-distribution. We study this model in three different problem settings---a multi-domain Massart noise setting, a decision tree multi-dataset setting, and a feature selection setting, and find that computationally efficient, polynomial-sample domain generalization is possible in each. Experiments demonstrate that our feature selection algorithm indeed ignores spurious correlations and improves generalization.
\end{abstract}

\section{Introduction}
Machine learning algorithms often fail to generalize in certain ways that come naturally to humans. For example, many people learn to drive in California, and after driving there for many years are able to drive in U.S.~states they have never before visited, despite variations in roads and climate. However, even a simple road-sign machine learning classifier would likely have decreased accuracy when tested on out-of-state road signs.

More generally, a common problem in real-world machine learning is that the training data do not match the test data. One well-studied instance of this issue is situations where the training and test data are drawn from different distributions over the same data domain. We are interested in a somewhat different problem---situations where \emph{the training data and the test data come from different (though potentially overlapping) domains}. A change in the data domain could occur because the underlying data distribution is changing over time, but it could also occur because an algorithm trained on data from a particular geographical location or context is later expected to perform in a different location or context.

While this problem of domain generalization has been studied empirically, our main contribution is a simple model of domain generalization in which theoretical results can be obtained. One challenge in formalizing the model is that arbitrary domain generalization is clearly impossible---an algorithm should not be expected to recognize a yield sign if it has never seen one (nor anything like it) before. We present a simple theoretical model of learning to generalize across domains in which there is a meta-distribution over data distributions, and those data distributions may have different domains (in the mathematical sense). In our model, the training data given to a learning algorithm consists of multiple datasets with each dataset drawn conditional on a single domain. The learning algorithm is expected to perform well on future domains drawn from the same distribution.

For example, there might be a meta-distribution over US states, and for each state there might be a distribution over examples, say features based on image and location latitude/longitude, taken in that state. The algorithm would be trained on multiple datasets---perhaps a Florida image dataset, a Wyoming image dataset, and an Alabama image dataset---and then would be expected to perform well not just on new images from Florida, Wyoming, and Alabama, but also on images from never-before-seen states. It may be, for example, that if each intersection had numerous visits, location features for predicting which type of sign is where, because signs rarely move. However, they may be seen not to generalize well across datasets.  

We then investigate this model in three quite distinct settings, and demonstrate that we can leverage the multi-domain structure in the problem to derive computationally efficient and conceptually simple algorithms. Our first result focuses on a multi-domain variant of the Massart noise model \citep{massart2006risk}, where there is a common target concept $c\in \cC$ across different domains but each domain has a different noise rate in the labels. We provide a general reduction from computationally efficient learning in this model to PAC learning under random classification noise \citep{AL87}. Our result can potentially provide new directions in resolving open questions in the standard Massart noise model where each individual example has its own label noise rate \citep{NIPS2019_8722}. See Section \ref{massart} for a discussion.

In our second result, we turn to another notoriously difficult computational problem---PAC learning decision trees. We make the assumption that there is a target decision tree that labels the examples across all domains, but examples in each domain all belong to a single leaf in this tree. Under this assumption, we provide an efficient algorithm with runtime $O(n+ s)$, where $n$ denotes the dimension of the data and $s$ denotes the number of nodes in the target tree. (Without any assumption, the fastest known algorithm runs in time $n^{O(\log s)}$).

Finally, our third result provides a simple algorithm for selecting features that are predictive across multiple domains. Our algorithm augments a black-box PAC learner with an additional correlation-based selection based on data across different domains. To empirically demonstrate its effectiveness, we also evaluate our algorithm on the ``Universities" dataset of webpages, for which the learning goal is to predict the category of each example (e.g., faculty, student, course, etc.). We show that our approach provides stronger cross-domain generalization than the standard baseline. As hypothesized, we find that features that are highly predictive in one university but not in another are in fact \textit{idiosyncratic}, removing them improves prediction on data from further universities not in the training set.   

We observe that our model of domain generalization enables two distinct advantages over the traditional PAC learning model. Most obviously, PAC-learned models do not come with any guarantee of performance on data points drawn from unobserved domains. Furthermore, the additional structure of training on multiple datasets enables in-sample guarantees that are not achievable in the PAC model.

\section{Related Work}

A rich literature sometimes known as domain adaptation~(e.g., \cite{DM06,BMP06,BBCP07,BCK+08,MMR09-1,MMR09-2,BBC+10,GL15,THSD17,MCM18,VMSM18}) considers settings where the learner has access not only to labeled training data, but also to unlabeled data from the test domain. This is a quite different setting from ours; our learner is given no access to data from the test domain, either labeled or unlabeled.

There is also a rich literature~(e.g., \cite{LZ08,LZX+08,CMW08,MMR09-3,GSB18}) that does not always rely on unlabeled data from the test distribution, but rather leverages information about similarity between domains to produce labels for new points. \cite{ZZY12}, relatedly, study the distance between domains in order to draw conclusions about generalization.

Adversarial approaches have recently gained attention \citep[e.g.][]{ZZW+18}, and in particular, \cite{VNS+18}, like us, generalize to unseen domains, but they attack the problem of domain generalization by augmenting the training data with fictitious, ``hard'' points.
There are also many other empirical approaches to the problem of domain generalization \citep[e.g.,][]{MBS13,khosla2012undoing,ghifary2015domain,li2017deeper,finn2017model,li2018learning,mancini2018best,BSC18,wang2019learning,carlucci2019domain,dou2019domain,li2019episodic}.

There are of course many other related fields of study, including covariate shift (wherein the source and target data generally have different distributions of unlabeled points but the same labeling rule), concept drift and model decay (wherein the distribution over unlabeled points generally remains static, but the labeling rule drifts over time), and multi-task learning (wherein the goal is generally to leverage access to multiple domains to improve performance on each of them, rather than generalizing to new domains).

%\swcomment{meta learning:}
%A Meta Understanding of Meta-Learning
%https://pdfs.semanticscholar.org/915a/a012252107929e0b461d47c098f7bdb61d5e.pdf

\section{Definitions}

For mathematical notation, we let $[n]$ denote $\{1,2,\ldots,n\}$ and $1_Q$ denote the indicator function that is 1 if predicate $Q$ holds and 0 otherwise. For vector $x \in \reals^n$, let $x[k]$ denote the $k$th coordinate of $x$. Finally, let $\Delta(S)$ denote the set of probability distributions over set $S$. We now define our model of learning from independent datasets.

\subsection{Generalizing from multiple domains}

We consider a model classification with datasets from independent domains where training data $T=\langle T^1, \ldots, T^d\rangle\sim \rho_m^{\times d}$ consists of \textit{datasets} $T^i =\langle (x^i_1,y^i_1), \ldots, (x^i_m, y^i_m)\rangle$ each of $m$ examples. These $d$ datasets are chosen iid from \textit{dataset distribution} $\rho_m$ over $(\cX \times\cY)^m$. In particular, it is assumed that there is a distribution $\rho\in\Delta(\cX\times\cY\times\cZ)$ where $\cX$ is a set of examples, $\cY$ is a set of labels, and $\cZ$ is a set of \textit{domains}. Based on this $\rho_m$ selects $m$ labeled examples from a common latent domain as follows: $(x_1,y_1,z_1)$ is picked from $\rho$, and $(x_j,y_j)$ is picked from $\rho$ conditional on its domain being $z_j=z_1$ for $j\geq 2$. For simplicity, in this paper we will focus on classification with equal-sized datasets and latent domains but the model can be generalized to other models of learning, unequal dataset sizes, and observed domains. A \emph{domain-generalization learner} $L$ takes training data $T$ divided into of multiple datasets of examples as input and outputs classifier $L_T:\cX\rightarrow\cY$. $L$ is said to be computationally efficient if it runs in time polynomial in its input length. 

The \textit{error} of classifier $c: \cX\rightarrow \cY$ is denoted by $\err_\rho(c)=\Pr_{x,y,z\sim \rho}[c(x)\neq y]$ and $\rho$ may be omitted when clear from context. 
This can be thought of in two ways: $\err_\rho(c)$ is the expected error on $d'$ test datasets of $m'$ examples or  it is also the  average performance across domains, i.e., error rate on a random example (from a random domain) from $\rho$.

We first define a model of sample-efficient learning, for large $d$, with respect to a family $\cC$ of classifiers. Following the agnostic-learning definition of \cite{Kearns92towardefficient}, we also consider an \textit{assumption} $\rho \in \cP$ where $\cP$ is a set of distributions over $\cX \times \cY\times\cZ$.
\begin{definition}[Efficient Domain Generalization]\label{iid-learning}
Computationally-efficient domain-generalization learner $L$ is an \textit{efficient domain-generalization} learner for classifiers $\cC$ over assumption $\cP$ if there exist polynomials $q_d$ and $q_m$ such that, for all $\rho \in \cP$, all $\eps,\delta>0$, and all $d \geq q_d(1/\delta, 1/\eps), m \geq q_m(1/\delta, 1/\epsilon)$, 
$$\Pr\nolimits_{T \sim \rho_m^{\times d}}[\err_\rho\bigl(L_T\bigr) \leq \min_{c \in \cC} \err_\rho(c) + \epsilon] \geq 1-\delta.$$
\end{definition}
Standard models of learning can be fit into this model using iid and noiseless assumptions:
\begin{align*}
    \cP_{iid} &= \{ \rho \in \Delta(\cX\times\cY\times\cZ)~|~z \text{ is independent of } (x,y)\text{ for }x,y,z\sim \rho\}\\
    \noiseless(\cC) &= \{ \rho \in \Delta(\cX\times\cY\times\cZ) ~|~ \min\nolimits_{c\in \cC} \err_\rho(c)=0\}
\end{align*}
In particular, \textit{agnostic learning} can be defined as efficient domain-generalization learning subject to $\cP_{iid}$ while \textit{PAC learning} \citep{PAC} can be defined as efficient domain-generalization learning with $\cP_{PAC}=\cP_{iid} \cap \noiseless(\cC)$. 

It is not difficult to see that Definition \ref{iid-learning} is not substantially different from PAC and Agnostic learning, with a large number of datasets:
\begin{observation}
If $\cC$ is PAC learnable, then $\cC$ is efficiently domain-generalization learnable with noiseless assumption $\cP_{shh}(\cC)$. If $\cC$ is agnostically learnable, then $\cC$ is efficiently domain-generalization learnable without assumption, i.e.,  $\cP=\Delta(\cX \times\cY\times \cZ)$.
\end{observation}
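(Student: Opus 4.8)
The plan is to reduce both claims to standard PAC and agnostic learning by extracting an ordinary iid sample from the multi-dataset training data. The crucial observation is that, although the $m$ examples within a single dataset $T^i$ are correlated (they share a latent domain $z_1$), the \emph{marginal} distribution of any single example is exactly the $\cX\times\cY$-marginal of $\rho$. For the first example this is immediate since $(x_1,y_1,z_1)\sim\rho$, and for $j\geq 2$ we have $\Pr[(x_j,y_j)=(x,y)]=\sum_z \Pr_\rho[z]\,\Pr_\rho[(x,y)\mid z]=\Pr_\rho[(x,y)]$; call this marginal $\rho_{XY}$. Moreover $\err_\rho(c)=\Pr_{x,y,z\sim\rho}[c(x)\neq y]$ depends only on $(x,y)$, so it equals the ordinary error of $c$ under $\rho_{XY}$, and in particular $\min_{c\in\cC}\err_\rho(c)$ is the usual approximation error of $\cC$ against $\rho_{XY}$.

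First I would form a standard iid sample by taking exactly one example from each dataset, say $(x^1_1,y^1_1),\ldots,(x^d_1,y^d_1)$. Since $T^1,\ldots,T^d\sim\rho_m^{\times d}$ are iid and each first example has marginal $\rho_{XY}$, these $d$ points are iid draws from $\rho_{XY}$. This is precisely the input format expected by a PAC or agnostic learner for $\cC$, and it only requires $m\geq 1$, so we may take $q_m=1$. For the first claim, under the noiseless assumption $\noiseless(\cC)$ we have $\min_{c\in\cC}\err_{\rho_{XY}}(c)=0$, i.e.\ $\rho_{XY}$ is realizable by $\cC$; feeding the $d$ iid examples to the PAC learner at accuracy $\eps$ and confidence $\delta$ returns $L_T$ with $\err_{\rho_{XY}}(L_T)\leq\eps$, hence $\err_\rho(L_T)\leq\eps$, with probability at least $1-\delta$ whenever $d$ exceeds the PAC sample complexity; taking $q_d$ to be that polynomial finishes this case. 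For the second claim I would argue identically but invoke the agnostic learner, which on $d$ iid samples from the arbitrary distribution $\rho_{XY}$ returns $L_T$ with $\err_{\rho_{XY}}(L_T)\leq\min_{c\in\cC}\err_{\rho_{XY}}(c)+\eps=\min_{c\in\cC}\err_\rho(c)+\eps$; since no realizability is invoked, this holds for every $\rho\in\Delta(\cX\times\cY\times\cZ)$, giving the ``without assumption'' conclusion.

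The only point that requires care—and, as the phrasing ``not difficult to see'' suggests, it is more a matter of bookkeeping than of genuine difficulty—is the marginalization step showing that a single example per dataset is distributed as $\rho_{XY}$ regardless of whether the latent domain $z$ is correlated with $(x,y)$. This is exactly what lets the PAC case dispense with the iid assumption $\cP_{iid}$ and rely on $\noiseless(\cC)$ alone, and what makes the agnostic case hold with no assumption at all. Everything after that is a direct application of the assumed learners, with the domain-generalization polynomials $q_d,q_m$ inherited from the underlying PAC/agnostic sample complexities.
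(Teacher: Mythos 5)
Your proposal is correct and takes essentially the same route as the paper: the paper's proof also runs the PAC (or agnostic) learner on the first example of each dataset, observing that these first examples are iid from $\rho$, so the standard guarantees transfer directly. Your write-up merely makes explicit the marginalization bookkeeping (that each single example has marginal $\rho_{XY}$ even when $z$ correlates with $(x,y)$) that the paper leaves implicit.
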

\begin{proof}
Simply take a PAC (or agnostic) learning algorithm for $\cC$ and run it on the first example in each dataset. Since these first examples are in fact iid from $\rho$, the guarantees of PAC (or agnostic) learning apply to the error for future examples drawn from $\rho$.
\end{proof}
This is somewhat dissatisfying, as one might hope that error rates would decrease as the number of data points per domain increases. This motivates the following definition which consider the rate at which the error decreases separately in terms of the number of datasets $d$ and number of examples per dataset $n$.
\begin{definition}[Dataset-efficient learning]\label{dataset-efficient-learning}
Computationally-efficient learner $L$ is an \textit{dataset-efficient learner} for classifiers $\cC$ over assumption $\cP$ if there exists polynomials $q_d$ and $q_m$ such that, for all $\rho \in \cP$, all $\eps,\delta>0$, and all $d \geq q_d(1/\delta), m \geq q_m(1/\delta, 1/\epsilon)$, 
$$\Pr\nolimits_{T \sim \rho_m^{\times d}}[\err_\rho\bigl(L_T\bigr) \leq \min_{c \in \cC} \err_\rho(c) + \epsilon] \geq 1-\delta.$$
\end{definition}
This  definition requires fewer datasets than the previous definition, requiring a number of datasets that depends only on $1/\delta$ regardless of $\eps$.

In PAC and agnostic learning, many problems have a natural \textit{complexity parameter} $n$ where $\cX =\bigcup_{n \geq 1} \cX_n$, $\cC =\bigcup_{n \geq 1} \cC_n$, $\cZ = \bigcup_{n \geq 1} \cZ_n$, $\cP = \bigcup_{n \geq 1} \cP_n$, such as $\cX_n=\reals^n$. In those cases, we  allow the number of examples and datasets, $q_d, q_m$ in Definitions \ref{iid-learning} and \ref{dataset-efficient-learning}, to also grow polynomially with $n$. %In line with PAC and agnostic learning, $L$ is said to be \textit{proper} if $L_T \in \cC$ for all training sets $T$ in the support of $\rho_n^{\times d}$.
Also note that the set $\cP$ can capture a host of other assumptions, such as a margin between positive and negative examples. It is not difficult to see that the model we use is equivalent to a meta-distribution over domains $z$ paired with domain-specific distributions over labeled examples, where the domain-specific distributions would simply be the distribution $\rho$ conditioned on the given domain $z$.
% Adam added 
Finally, while we assume that the chosen zones $z^i$ are not given to the learner---this is without loss of generality as the zones could be redundantly encoded in the examples $x$.
%Finally, learner $L$ is said to be \textit{proper} if $L_T \in \cC$ with probability 1.

\section{Multi-Domain Massart Noise Model} \label{massart}

% Adam moved things around a bit
In the Massart noise model~\citep{massart2006risk}, each individual example $x$ has its own label noise rate that is, $\Pr[c(x)\neq y] = \eta(x) \leq \eta_b$, at most a given upper-bound $\eta_b<1/2$. Learning under
this model is computationally challenging and no efficient algorithms are known even for simple concept classes \citep{NIPS2019_8722}, despite the fact the statistical complexity of learning in this model is no worse than learning with noise rate $\eta$. We study a multi-domain variant of the Massart model, in which the learner receives examples with noisy labels from multiple domains such that each domain has its own fixed noise rate. We demonstrate that by leveraging the cross-domain structure of the problem we can obtain a broad class of computationally efficient algorithms. In particular, we provide a reduction from a multi-domain variant of the Massart noise model to PAC learning under random classification noise~\citep{AL87}. Let us first state the model formally as an assumption over the distributions $\Delta(\cX\times \cY \times \cZ)$.

\paragraph{Assumption $\cP_{\MDM}$.} There exists an unknown classifier $c\in \cC$ and an unknown noise rate function $\eta \colon \cZ \rightarrow \mathbb{R}$ such that the distribution $\rho$ over $\cX \times \cY \times \cZ$ satisfies $\Pr_\rho[ y \neq c(x) \mid z] = \eta(z) \leq \eta_b < 1/2$. We assume quantity $\eta_b$ is known to the learner.

\iffalse\swdelete{The learning algorithm has access to a collection of independent datasets $T=\langle T^1, \ldots, T^d\rangle\sim \rho_m^{\times d}$ {datasets}, such that each dataset $T^i =\langle (x^i_1,y^i_1, z_i), \ldots, (x^i_m, y^i_m,  z_i)\rangle$ consists of $m$ examples with the same domain $z_i$, and hence the same noise rate.}\fi

Note that the minimal error rate $\Pr_\rho[y \neq c(x)] \in [0,\eta]$, achieved by the ``true'' classifier $c$, can be much smaller than $\eta$. Our multi-domain variant is a generalization in that the marginal distribution over labeled examples, ignoring zones, fits the Massart noise model. We will leverage the zone structure to provide a reduction from the learning problem in this model to PAC learning under classification noise, defined below.

\paragraph{PAC learning under {classification noise (CN)}~\citep{AL87}} 
Let $\rho_\cX$ be a distribution over $\cX$. For any
\emph{noise rate} $0 \leq \eta < 1/2$, the example oracle
$\text{EX}_{\mathrm{CN}}^\eta(c, \rho_\cX)$  on each call returns an example $(x, y)$ by first drawing an example $x$ from $\rho_\cX$ and then drawing a random noisy label $y$ such that $\Pr[y \neq c(x)] = \eta <\eta_b$, where $\eta_b$ is an known upper bound.
 The concept class $\cC$ is~\emph{CN learnable} if there exists a learning algorithm $\cL$ and a polynomial $f$ such that for any distribution $\rho_\cX$ over $\cX$, any noise rate $0\leq \eta < 1/2$, and for any $0 < \eps \leq 1$ and $0 < \delta \leq 1$, the following holds: $\cL$ will run in time bounded by  $f(1 / (1 - 2 \eta_b), 1/\eps, 1/\delta)$ and output a hypothesis $h$ that with probability at least $1 - \delta$  satisfies $\Pr_{x\sim \rho_\cX}[h(x) \neq c(x)] \leq \eps$.

\iffalse
polynomial-time algorithm that given sample access to $\text{EX}_{\mathrm{CN}}^\eta$ finds a
hypothesis $h$ that approximately minimizes the classification error:
$err(h) = \Pr_{x\sim \cD} [c(x) \neq h(x)]$.
\begin{definition}[CN Learnability~\citep{AL87}]
  Let $\cC$ be a concept class over $\cX$. We say that $\cC$ is
  efficiently learnable with noise (CN learnable) % using $\cH$
  if there exists a learning algorithm $\cL$ such that for any
  $c\in\cC$, any distribution $\cD$ over $\cX$, any noise rate
  $0\leq \eta < 1/2$, and for any $0 < \eps \leq 1$ and
  $0 < \delta \leq 1$, the following holds: if $\cL$ is given inputs
  $\eta_b$ (where $1/2 > \eta_b \geq \eta$), $\eps, \delta, n$, and is
  given access to $\text{EX}_{\mathrm{CN}}^\eta(c, \cD)$, then $\cL$ will
  halt in time bounded by
  $\poly(1 / (1 - 2 \eta_b), 1/\eps, 1/\delta, n)$ and output a
  hypothesis $h\in \cH$ that with probability at least $1 - \delta$
  satisfies $err(h)\leq \eps$.
\end{definition}
\fi

\begin{theorem}
Let  $\cC$ be a concept class that is CN learnable. Then there exists an efficient domain generalization learner for $\cC$ under the multi-domain Massart assumption $\cP_\MDM$.
\end{theorem}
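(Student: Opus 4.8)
The plan is to reduce the problem to a \emph{single} invocation of the black-box CN learner on a simulated oracle $\text{EX}_{\mathrm{CN}}^{\eta^*}(c,\rho_\cX)$ with a \emph{constant} noise rate $\eta^*<1/2$ and example marginal $\rho_\cX$. The difficulty is exactly that the marginal over $(x,y)$, ignoring domains, is Massart rather than CN: within any single dataset $T^i$ the labels are generated by classification noise with one fixed but unknown rate $\eta(z_i)\le\eta_b$, yet this rate varies across the $d$ domains. The idea is \emph{noise equalization}: estimate each domain's rate and inject fresh independent label noise so that every domain is raised to a common target rate $\eta^*$; after equalization a single example drawn from a random dataset looks like a draw from a genuine CN oracle over $\rho_\cX$, which the black-box learner handles.

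Concretely, I would first process each dataset $T^i$ separately. Split $T^i$ into a training part, an estimation part, and one held-out example. Run the CN learner on the training part---whose labels are exactly CN with rate $\eta(z_i)$---to obtain $h_i$ with $\Pr_{x\sim\rho_{\cX\mid z_i}}[h_i(x)\neq c(x)]\le e_i$ small. On the fresh estimation part, let $\hat\eta_i$ be the empirical disagreement rate between $y$ and $h_i(x)$; for binary labels one finds $\E[\hat\eta_i]=\eta(z_i)+e_i\bigl(1-2\eta(z_i)\bigr)$, so $\hat\eta_i$ approximates $\eta(z_i)$ up to $O(e_i)$ plus sampling error. Fixing a target $\eta^*$ with $\eta_b<\eta^*<1/2$ (e.g.\ $\eta^*=(\eta_b+1/2)/2$), set the flip probability $p_i=\frac{\eta^*-\hat\eta_i}{1-2\hat\eta_i}$; since $\hat\eta_i\le\eta^*$ with high probability and $\eta^*<1/2$, we have $p_i\in[0,1)$, and independently flipping a label drawn at rate $\eta(z_i)$ with probability $p_i$ produces an effective rate $\eta(z_i)+p_i\bigl(1-2\eta(z_i)\bigr)=\eta^*$ whenever $\hat\eta_i=\eta(z_i)$.

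Next I would pool: from each dataset take its held-out example $x$, flip the noisy label with probability $p_i$, and collect the resulting $d$ labeled examples. Because the datasets are iid from $\rho_m$, these held-out examples are iid with $x\sim\rho_\cX$ and their labels carry noise rate approximately $\eta^*$ independent of $x$---this is (approximately) the oracle $\text{EX}_{\mathrm{CN}}^{\eta^*}(c,\rho_\cX)$. Running the CN learner on this pool yields $h$ with $\Pr_{x\sim\rho_\cX}[h(x)\neq c(x)]\le\eps$, and the standard decomposition gives $\err_\rho(h)\le\err_\rho(c)+\eps$. Because each domain's noise rate is below $1/2$, the target $c$ is Bayes optimal, so $\err_\rho(c)=\min_{c'\in\cC}\err_\rho(c')$; this is exactly the guarantee of Definition~\ref{iid-learning}, with $d$ and $m$ polynomial in $1/\eps,1/\delta$.

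The main obstacle is that estimation error leaves a residual: the achieved rate is not exactly $\eta^*$ but lies in a per-domain band $[\eta^*-\beta,\eta^*+\beta]$, so the pool is a slightly Massart sample rather than a genuine CN sample, while the black-box guarantee assumes a constant rate. I would resolve this with a coupling argument. For each pooled example, couple its imperfectly-boosted label with an ideal label boosted to exactly $\eta^*$; the two disagree with probability at most $\beta$, so the total-variation distance between the $d$-example pool and a true $\text{EX}_{\mathrm{CN}}^{\eta^*}$ sample is at most $d\beta$. Since $\beta=O(e_i+1/\sqrt{m})$ can be driven below any inverse polynomial by taking the per-dataset accuracy $e_i$ small and $m$ large---all polynomially---we can force $d\beta\le\delta$, whereupon the CN learner, unable to distinguish the two inputs, succeeds on the pool with essentially its nominal probability. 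A union bound over the $d$ per-dataset estimation failures and the final learning failure completes the argument.
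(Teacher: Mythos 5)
Your proposal is correct, and it shares the paper's two-stage skeleton---run the black-box CN learner inside each dataset (where labels are genuinely CN at the fixed rate $\eta(z_i)\le\eta_b$), hold out one fresh example per dataset, pool those $d$ points (which are iid from $\rho_\cX$), and make one final CN call on the pool---but the mechanism by which the pool is turned into a CN sample is genuinely different. The paper \emph{denoises}: it discards the held-out points' noisy labels and relabels each $x^i$ with $h_i(x^i)$; since each $h_i$ errs on a fresh point from its domain with probability at most $\delta/(4d)$, a union bound makes every pooled label equal $c(x^i)$ with probability $1-\delta/4$, so on that event the pool simulates $\mathrm{EX}_{\mathrm{CN}}^{0}(c,\rho_\cX)$ outright---no noise-rate estimation, no injection, no coupling. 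You instead \emph{equalize upward}: keep the observed labels, estimate $\eta(z_i)$ from disagreement with $h_i$, inject calibrated fresh noise to push every domain to a common rate $\eta^*\in(\eta_b,1/2)$, and absorb the residual per-domain deviation with a total-variation/coupling argument. Your individual steps check out: the bias formula for $\hat\eta_i$, the range $p_i\in[0,1)$ given $\hat\eta_i\le\eta^*$, the bound $|\eta_i'-\eta^*|\le|\hat\eta_i-\eta(z_i)|$ on the achieved rate (because $p_i\in[0,1]$), the $d\beta$ total-variation bound against an ideal exact-$\eta^*$ pool, and the final decomposition $\err_\rho(h)\le\Pr[h(x)\neq c(x)]+\err_\rho(c)$ together with the observation that $c$ minimizes $\err_\rho$ over $\cC$ (a step the paper leaves implicit). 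The trade-off: the paper's route is shorter, needs no rate estimation, and its final CN call runs at noise rate $0$, so its number of datasets $d=f(1,1/\eps,2/\delta)$ does not depend on $\eta_b$; your final call runs at rate $\eta^*>\eta_b$, so $d$ picks up a $\poly(1/(1-2\eta_b))$ dependence, and you spend an extra estimation split per dataset plus the coupling machinery. What you gain is that the pooled labels are genuine observations rather than $h_i$'s predictions, and a reusable argument for feeding approximately-CN data to an exactly-CN black box---but for this particular theorem that is machinery the paper's relabeling trick simply avoids.
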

The basic idea behind the proof is to ``denoise'' data from each dataset by training a classifier within each dataset and then using that classifier to label another held-out example from that zone. If that classifier had high accuracy, then with high probability the predicted labels will not be correct. A noiseless classification algorithm can then be applied to the denoised data.  
\begin{proof}
\iffalse
Since the class $\cC$ is CN learnable, there exists a learning algorithm $\cL$ such that any distribution $\cD$ over $\cX$, any noise rate
  $0\leq \eta < 1/2$, and for any $0 < \eps' \leq 1$ and $0 < \delta' \leq 1$, the following holds: if $\cL$ is given inputs $\eta_b$ (where $1/2 > \eta_b \geq \eta$), $\eps', \delta'$, and is
  given access to $\text{EX}_{\mathrm{CN}}^\eta(c, \cD)$, then $\cL$ will
  halt in time bounded by  $f(1 / (1 - 2 \eta_b), 1/\eps', 1/\delta')$ and output a
  hypothesis $h\in \cC$ that with probability at least $1 - \delta'$
  satisfies $\Pr_{x\sim \cD}[h(x) \neq c(x)] \leq \eps'$.
\fi
Let $\cL$ be a CN learner for $\cC$ with runtime polynomial $f$. To leverage this learner to learn under the multi-dataset Massart model, we will aim to create an example oracle $\text{EX}_{\mathrm{CN}}^\eta$. Let $c \in \cC$ be the target concept, and let $\eps, \delta \in (0,1)$ be the target accuracy parameters. We will first draw a collection of  $d = f(1, 1/\eps, 2/\delta)$ datasets $T = \langle T^1, \ldots T^d\rangle$ from $\rho_m^{\times d}$, where $m > f(1/(1-2\eta_b), \delta/(4d), \delta/(4d))$. We will run the CN learner with a random subset of $T_i$ of size $(m-1)$ as input and obtain an hypothesis $h_i$ such that with probability $1 - \delta/(4d)$,
\begin{equation}
\Pr_{\rho_i}\left[ h_i(x) \neq c(x) \right] \leq \delta/(4d) ,  \label{pacman}
\end{equation}
where $\rho_i$ denotes the conditional distribution over $\cX$ conditioned on the zone being $z_i$. By a union bound, we know that except with probability $\delta/4$, \eqref{pacman} holds for all datasets $i$. We will condition on this level of accuracy (event $E_1$). Let $(x^i, y^i)$ denote an example in $T_i$ that was not used for learning $h_i$. This provides another dataset $\hat T = \langle (x^1, h_i(x^1)) , \ldots , (x^d, h_i(x^d)) \rangle$. Note that the $x^i$'s i.i.d. draws from the $\rho_\cX$, the marginal distribution of $\rho$ over $\cX$. Furthermore, by the accuracy guarantee of each $h_i$, $\Pr[h_i(x^i) \neq c(x^i)] \leq \delta/(4d)$. By a union bound, we know that except with probability $\delta/4$, $h_i(x^i) = c(x^i)$ for all $i\in[d]$. We will condition on this event of correct labeling (event $E_2$). This means the examples in  $\hat T$ can simulate random draws from $\mathrm{EX}_{\mathrm{CN}}^0(c, \rho_\cX)$. Finally, we will run $\cL$ over the set $\hat T$, and by our choice of $d$, $\cL$ will output a hypothesis $h$ such that $\Pr_\rho[h(x)\neq c(x)] \leq \eps$ with probability at least $1-\delta/2$ (event $E_3$). Finally, our learning guarantee follows by combining the failure probability of the three events $E_1, E_2, E_3$ with a union bound. 
\end{proof}

\paragraph{Open problem in the (multi-domain) Massart model.} An open question in the multi-domain Massart noise model is whether there exists an efficient algorithm that only relies on a constant number of examples from each domain. If we can decrease the number of examples in each domain down to 1, we recover the standard Massart noise model. Thus, we view this as an intermediate step towards an efficient algorithm for the standard Massart model \citep{NIPS2019_8722}.

\section{Decision Tree Multi-Dataset Model}

We next consider learning binary decision trees on $\cX=\{0,1\}^n$. Despite years of study, there is no known polynomial-time PAC learner for decision trees, with the fastest known algorithm learning binary decision trees of size $\leq s$ in time $n^{O(\log s)}$ \citep{hellerstein2007pac}.  Formally, a decision tree is a rooted binary tree where each internal node is annotated with an attribute $1 \leq i \leq n$, and the two child edges are annotated with 0 and 1 corresponding to the restrictions $x[i]=0$ and $x[i]=1$. Each leaf is annotated with a label $\{0,1\}$, and on $x$ the classifier computes the function that is the label of the leaf reached by following the path starting at the root of tree and following the corresponding restrictions. 

\paragraph{Assumption $\cP_{DT}(s, n)$.}
Let $\cT_{s, n}$ be the class of decision trees with at most $s$ leaves. The domains simply correspond to the leaves of the tree in which the (noiseless) example belongs. To make this assumption denoted $\cP_{DT}(s, n)$ formal, let the set of domains $\cZ$ is simply the set of all $3^n$ possible \textit{conjunctions} (each $x[j]$ can appear as positive, negative, or not at all) on $n$ variables. We identify each leaf $\ell$ in a tree with domain $z_\ell\equiv x[j_1]=v_1 \wedge x[j_2]=v_2\wedge \ldots \wedge x[j_k]=v_k$, where $k$ is the depth of the leaf, $j_1,j_2, \ldots, j_k\leq n$ are the annotations of the internal nodes on the path, and $v_k \in \{0,1\}$ correspond to the edges on the path to that leaf. Using this notation, the assumption $\cP_{DT}(s, n)$ is that there is a tree $T \in \cT_{s, n}$ for which, with probability 1 over $\rho$, every example $(x,y,z)$ satisfies $z=z_\ell$ for the leaf $\ell$ in $T$ which $x$ belongs to, i.e., conjunction $z_\ell$ holds, and $y=T(x)$, i.e., noiselessness $\cP_{DT}(s, n) \subset \cP_{shh}(\cT_{s,n})$. 

Recall that the chosen domains $z^i$ themselves are not observed, otherwise the learning problem would be trivial.
Also note that the natural algorithm that tries to learn a classifier for each dataset to distinguish those examples from examples in other datasets will not work because multiple datasets may represent the same leaf (zone). Instead, we leverage the fact that conjunctions can be learned from positive examples alone. 

In particular, we think of the decision tree simply as the union (OR) of the conjunctions corresponding to leaves labeled positively. It is known to be easy to PAC-learn conjunctions \textit{from positive examples alone} by outputting the \textit{largest consistent conjunction} \cite[][Section 1.3]{kearns1994introduction}: the hypothesis given by the conjunction of the subset of  possible terms $\{x[j]=b \mid j \in [n], b\in\{0,1\}\}$ that are consistent with every positively labeled example.\footnote{For example, for the two positive examples $(0,0,1)$ and $(0,1,1)$, the largest consistent conjunction is $x[1]=0 \wedge x[3]=1$.} It is largest in terms of the number of terms, but it is minimal in terms of the positive predictions it makes, and it never has any false positives. The following algorithm learns decision trees in the above multi-dataset decision tree model.
\begin{enumerate}
    \item Input: training data $T^1, T^2, \ldots, T^d$ .
    \item Let $\textsc{PositiveDomains} = \{i~|~y^i_1=1\}$.
    \item For each $i \in \textsc{PositiveDomains}$, find the largest consistent conjunction $c_i$ for $T^i$.
    \item Output the classifier $\hat{c}(x)=\begin{cases}1 & \text{if }c_i(x)=1\text{ for any } i \in \textsc{PositiveDomains}\\0 & \text{otherwise}.\end{cases}$
\end{enumerate}
\begin{theorem}\label{thm:decisionTrees}
Let $s, n \geq 1$ and $\cT_s$ be the family of binary decision trees of size at most $s$ on $\{0,1\}^n$. Then the above algorithm is an efficient domain-generalization learner for $\cP_{DT}(s, n)$ for complexity parameter $N=n+s$.
\end{theorem}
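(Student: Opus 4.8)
The plan is to decompose the error of the output classifier $\hat c$ into false positives and false negatives, argue that the former vanish \emph{deterministically}, and then drive the latter below $\eps$ by choosing $d$ and $m$ appropriately. The two structural facts I would establish first are: (i) since every example in a single dataset $T^i$ is drawn from a common latent domain, and each domain is a leaf conjunction $z_{\ell_i}$ of the target tree $T$, all $m$ labels in $T^i$ agree and equal the label of that leaf --- hence $i \in \textsc{PositiveDomains}$ exactly when $\ell_i$ is positively labeled; and (ii) the positive region of $T$ is exactly the OR of the leaf conjunctions over positive leaves, i.e.\ $T(x)=1 \iff \bigvee_{\ell \text{ positive}} z_\ell(x)=1$, since each $x$ reaches exactly one leaf and $z_\ell(x)=1$ precisely when $x$ reaches $\ell$.

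Next I would handle false positives. The largest consistent conjunction $c_i$ for a positive dataset $T^i$ includes every literal $x[j]=b$ satisfied by all $m$ examples; in particular it includes all literals of $z_{\ell_i}$, which hold for every example in that leaf, so $c_i(x)=1 \Rightarrow z_{\ell_i}(x)=1 \Rightarrow T(x)=1$. Thus whenever $\hat c(x)=1$ we have $T(x)=1=y$ (with probability $1$ under $\cP_{DT}$), so $\hat c$ makes \emph{no} false positives and $\err_\rho(\hat c)=\Pr_\rho[\hat c(x)=0,\ T(x)=1]$.

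It remains to bound the false negatives, and this is where the two sample sizes enter. For a fixed positive leaf $\ell$, write $\rho_\ell$ for $\rho$ conditioned on $x$ reaching $\ell$; every draw from $\rho_\ell$ satisfies $z_\ell$, so learning $c_i$ from a dataset in $\ell$ is exactly PAC-learning a conjunction from positive examples. Calling a literal \emph{bad} if it is violated with $\rho_\ell$-probability at least $\eps'/(2n)$, each bad literal enters $c_i$ only if it survives all $m$ samples, an event of probability at most $(1-\eps'/(2n))^m$; a union bound over the at most $2n$ literals shows that, for $m=\poly(n,1/\eps',\log(1/\delta'))$, with probability $1-\delta'$ no bad literal enters $c_i$, whence $\Pr_{\rho_\ell}[c_i(x)=0]\le \sum_{\text{non-bad }u}\Pr_{\rho_\ell}[x\text{ violates }u]\le \eps'$. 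Since $\hat c(x)=0$ forces $c_i(x)=0$ for \emph{every} positive dataset --- including one drawn from $\ell$ --- the conditional false-negative rate on $\ell$ is at most $\eps'$ whenever $\ell$ has been hit by some dataset.

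Finally I would assemble the pieces via a coverage argument over the at most $s$ leaves. Call a positive leaf \emph{heavy} if $p_\ell:=\Pr_\rho[x\in\ell]\ge \eps/(2s)$. Light positive leaves contribute total mass at most $s\cdot\eps/(2s)=\eps/2$ to the false negatives even if entirely uncovered. Each heavy leaf is missed by all $d$ datasets with probability at most $(1-\eps/(2s))^d$, so for $d=\poly(s,1/\eps,\log(1/\delta))$ a union bound ensures every heavy leaf is hit with probability $1-\delta/2$; setting $\eps'=\eps/2$ and taking a union bound over the per-dataset conjunction guarantees with $\delta'=\delta/(2d)$ controls the remaining failure probability by $\delta/2$. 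On the joint good event the false negatives total at most $\sum_{\ell\text{ heavy}} p_\ell\eps' + \eps/2 \le \eps/2+\eps/2=\eps$, giving $\err_\rho(\hat c)\le\eps$ with probability $1-\delta$. Both $q_d$ and $q_m$ are polynomial in $1/\eps$, $\log(1/\delta)$, and $N=n+s$, and the learner clearly runs in time polynomial in its input, so the conditions of Definition~\ref{iid-learning} hold. I expect the main obstacle to be this last step: choosing the heaviness threshold and the per-leaf accuracy target so that the uncovered-light-leaf mass and the covered-heavy-leaf error both fall below $\eps/2$, while keeping the union bound over up to $d$ learned conjunctions from inflating the sample complexity.
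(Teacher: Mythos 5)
Your proof is correct, and it reaches the theorem by a genuinely different route than the paper for the only nontrivial part, the false-negative bound. You share the same skeleton (no false positives, deterministically, because each $c_i$ contains all literals of its leaf conjunction; then bound false negatives), but the paper never works in high probability directly: it bounds the \emph{expected} error rate, summing over leaves a term $p_\ell(1-p_\ell)^d \leq 1/d$ for never-sampled leaves and a term $p_\ell\cdot 2n/(m+1)$ for over-specialized conjunctions, the latter via an exchangeability (leave-one-out) argument --- among $m+1$ exchangeable positive examples from a leaf, the probability that the unique violator of a fixed spurious literal is the held-out test point is $1/(m+1)$. This yields expected error below $s/d + 2n/m \leq \eps\delta$, and Markov's inequality converts that to the required statement with $d \geq 4s/(\eps\delta)$, $m \geq 4n/(\eps\delta)$. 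Your argument instead uses the standard PAC machinery: bad-literal elimination for conjunctions learned from positive examples, a heavy/light leaf dichotomy for coverage, and union bounds over leaves and datasets. What each buys: the paper's expectation-plus-Markov argument is shorter, needs no thresholds or union bounds (it silently handles exactly the conditioning bookkeeping you flag as the main obstacle), and gives clean constants with no logarithmic factors, but pays a linear $1/\delta$ dependence in both $d$ and $m$; your argument costs extra $\log$ factors and more case analysis but achieves $\log(1/\delta)$ confidence dependence, which is strictly better scaling in $\delta$ and is closer to what one would want if $\delta$ were driven very small. Both satisfy Definition~\ref{iid-learning}, so either proof establishes the theorem.
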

For decision trees, the complexity of the class depends on both the number of variables and the size of the tree, hence we use $N=n+s$ as a complexity measure.
\begin{proof}
For high-probability bounds, it suffices to guarantee \textit{expected} error rate at most $\eps\delta$, for $m,d\geq q(\frac{sn}{\eps\delta})$, for some polynomial $q$, by Markov's inequality.

First, it is not difficult to see that the algorithm will never have any false positives, i.e., it will never predict positively when the true label is negative. To see this, note that each positive prediction must arise because of at least one $c_i$. As mentioned above, \cite{kearns1994introduction} show that the largest consistent classifier with any set of (noiseless) positive data is conservative in that it never has any false positives. Hence the above algorithm will never have any false positives.

We bound the expected rate of false negatives (which is equal to the expected error rate) by summing over leaves and using linearity of expectation. False negatives in positive leaf $\ell$ can arise in two ways: (a) leaf $\ell$ was simply never chosen as a domain, and (b) leaf $\ell$ was chosen $z^i=\ell$ for some $i \leq d$, but there is a term $x[j]=k$ for some $j\leq n, k \in \{0,1\}$ which occurs in $c_i$ but not in $z_\ell$ in which case any positive example that satisfies $x[j]=k$ will be a false negative. Moreover, these are the only types of false negatives. Hence, the expected rate of false negatives coming from leaf $\ell$ with probability $p_\ell$ due to (a) is $p_\ell(1-p_\ell)^d$, the fraction of examples from leaf $\ell$ times the probability that domain $z_\ell$ was never chosen. The expected rate of false negatives due to (b) is at most $p_\ell 2n/(m+1)$, again the probability of leaf $\ell$ times $2n/(m+1)$. To see why, note that there are at most $2n$ terms $(x[j]=k)$ not in the true conjunction $z_\ell$ and, for each such term, the expected error contribution can be upper bounded by imagining picking $m+1$ examples at random, $m$ for training and 1 for test. The probability that among $m+1$ positive examples, that only example which would satisfy that term would be the one chosen for test is $1/(m+1)$. Hence the expected rate of false negatives and hence also the expected error rate is at most
\begin{equation}\label{eq:dt}
\sum_\ell p_\ell (1-p_\ell)^d  + p_\ell\frac{2n}{m+1} < \frac{s}{d} + \frac{2n}{m}.
\end{equation}
The inequality above holds for the left term because $r(1-r)^d\leq 1/d$ for $r\in [0,1]$ and for the right term because the $\sum p_\ell=1$ and for the left term by concavity of $\sum p_\ell(1-p_\ell)^d$ on the probability simplex. Note that the above error rate is bounded by $\eps\delta$ if we have $d \geq 4s/(\eps \delta)$ and $m \geq 4n/(\eps \delta)$, which completes our proof.
\end{proof}

\section{Feature Selection Using Domains}\label{sec:feature-selection}
Finally, we use access to training data from multiple domains to aid in performing feature selection.

In this section, we fix $\cX=\{0,1\}^n$.
For set $R \subseteq[d]$, let $x[R]=\langle x[k]\rangle_{k\in R}\in \{0,1\}^{|R|}$ denote the selected features $R$ of example $x\in\cX$.
Let $z^i$ denote the domain corresponding to training dataset $T^i$, for each $i \in [d]$. Define $\rho_k$ to be the correlation of $x[k]$ and $y$ over $\rho$ and let $\rho^i_k$ denote the usual (Pearson) correlation coefficient of feature $x[k]$ with $y$ conditioned on the example having domain $z=z^i$. Let $\hat{\rho}^i_k$ denote the empirical correlation of $x[k]$ and $y$ on $T^i$.

The following algorithm (\textbf{FUD}) performs feature selection using domains.
\begin{enumerate}
    \item Input: class $\cC$, parameters $\beta, \eps \geq 0$, training data $T$ consisting of $d$ splits of $m$ examples each.
    \item If the overall fraction of positive or negative examples is less than $\eps/2$ (massive class imbalance), stop and output the constant classifier $c(x)=0$ or $c(x)=1$, respectively.
    \item For each variable $i\in[n]$, compute empirical correlation $\hat\rho_k^i$ of $x[k]$ and $y$ over each dataset $i \in [d]$.
    %where $\bar{x}^i_k=\frac{1}{m}\sum_{s,x,y\in\sim T^i}x_k$ and $\bar{y}^i = \sum_{s,x,y\in T^i}y$.
    \item Let $R=\left\{k~|~\min_i |\hat{\rho}^i_k|\geq \beta \right\}$.
    \item Find any $c \in \cC$ such that $c(x[R])=y$ for all $s,x,y \in T$, and output classifier $f(x)=c(x[R])$. If no such $c$ exists, output \textit{FAIL}.
\end{enumerate}

\paragraph{Assumption  $FS(\cC,\beta)$} 
For $\beta>0$ we define the Feature Selection assumption $FS(\cC,\beta)$ to require that there exists a robust set of features $R \subseteq [d]$ such that:
\begin{itemize}
    \item Noiselessness $\noiseless(\cC)$: For some $c \in \cC$, $\Pr_\rho[c(x[R])=y]=1$.
    \item Independence: $x[R]$ and $z$ are independent over $\rho$.
    \item Correlation: For all $k \in R$, $|\rho[k]| > 1.1\beta$
    \item Idiosyncrasy%\footnote{The Idiosyncratic assumption can be slightly weakened to $\Pr_{s}[\rho^s_j \leq 0.9\beta]\Pr_s[\rho^s_j \geq -0.9\beta] \geq 0.1$ and the algorithm can be slightly improved by requiring that the signs of all the correlations are the same for a feature to be selected into $R$.}
: For all $k \not\in R$, $\Pr_{x,y,z\sim \rho}\bigl[~|\rho^z_k| < 0.9\beta\bigr] > 0.1$.
\end{itemize}
Note that the constants 1.1, 0.9 and 0.1 in the above assumption can be replaced by parameters (e.g., $1\pm \eps_1$ and $\eps_2$) and the dependence of $d$ and $m$ on these parameters in the following theorem would be inverse polynomial.
\begin{theorem}\label{thm:FUD}
For any $\cC$ of finite VC dimension $VC(\cC)$, with $\cX = \{0,1\}^n$, $\cY=\{0,1\}$ and any $\beta> 0$, FUD is a dataset-efficient learner under assumption $FS(\cC,\beta)$. In particular, for $d=O\left(\log \frac{n}{\delta}\right)$ and $m=O\left(\frac{VC(\cC)}{\eps}+ \frac{\log(n/\delta)}{\beta^4\eps^2}\right)$, 
$$\Pr_T[\err_\rho(\text{FUD}_T) \leq \eps] \geq 1-\delta,$$
for any $\eps,\delta\in (0,1/2)$.
\end{theorem}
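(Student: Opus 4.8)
The plan is to decompose FUD's behavior into two stages: (i) \emph{feature recovery}, showing that with high probability the empirically selected set $R$ in Step~4 equals the true robust set guaranteed by $FS(\cC,\beta)$, and (ii) given correct recovery, a standard realizable PAC argument for the consistent classifier found in Step~5. Before either stage I would dispose of the class-imbalance case: if the minority class has true frequency below $\eps/2$ then the constant classifier of Step~2 already has error at most $\eps/2$, while a Chernoff bound shows the empirical and true minority frequencies agree up to a small fraction of $\eps$, so Step~2 triggers exactly when it should; outside this case both classes have frequency $\Omega(\eps)$ and hence $\mathrm{Var}(y)\geq \Omega(\eps)$, which the concentration step will need. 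The structural backbone is that Noiselessness ($y=c(x[R])$ a.s.) together with Independence ($x[R]\perp z$) forces $(x[R],y)$ to be independent of $z$. Two consequences follow: for every $k\in R$ the conditional correlation equals the unconditional one, $\rho^z_k=\rho_k$ for all $z$, so by the Correlation condition a robust feature has true per-domain correlation $>1.1\beta$ in \emph{every} domain; and since the law of $(x[R],y)$ is common across domains, the $md$ examples restricted to coordinates $R$ and the label behave as $md$ fully iid draws, which is what licenses VC theory in stage~(ii) despite the within-dataset coupling through $z$.

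For feature recovery I would prove both inclusions through a single uniform correlation-concentration claim: with the stated $m$, every $\hat\rho^i_k$ lies within $0.1\beta$ of the true per-domain correlation $\rho^{z^i}_k$, simultaneously over all $n$ features and $d$ datasets. Granting this, a feature $k\in R$ has per-domain correlation $>1.1\beta$ in all datasets, hence $|\hat\rho^i_k|>\beta$ for every $i$ and $k$ is selected; conversely, a feature $k\notin R$ has, by Idiosyncrasy, probability $>0.1$ of landing in a domain with $|\rho^z_k|<0.9\beta$, so with $d=O(\log(n/\delta))$ iid datasets at least one such dataset occurs except with probability $\delta/(2n)$, and there $|\hat\rho^i_k|<\beta$, so $k$ is rejected. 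A union bound over the at most $n$ idiosyncratic features gives exact recovery $R=R_{\text{true}}$ with probability $1-\delta/2$.

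The main obstacle is the concentration claim itself and the origin of the $\beta^4\eps^2$ rate. The empirical Pearson coefficient is a ratio whose denominator $\sqrt{\mathrm{Var}(x[k])\,\mathrm{Var}(y)}$ can be small, so additive control demands lower bounds on both variances. The label gives $\mathrm{Var}(y)\geq\Omega(\eps)$ in the non-imbalanced case. For a robust feature I would show that $|\rho_k|>1.1\beta$ is incompatible with tiny $\mathrm{Var}(x[k])$: for binary variables $|\mathrm{Cov}(x[k],y)|\leq\min(\Pr[x[k]=1],\Pr[x[k]=0])\leq 2\,\mathrm{Var}(x[k])$, which combined with $|\mathrm{Cov}|=|\rho_k|\sqrt{\mathrm{Var}(x[k])\mathrm{Var}(y)}$ yields $\mathrm{Var}(x[k])\geq\tfrac14\rho_k^2\,\mathrm{Var}(y)=\Omega(\beta^2\eps)$. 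Estimating each of the three moments to additive accuracy $\tau$ by Hoeffding and propagating through the ratio gives an error of order $\tau/\sqrt{\mathrm{Var}(x[k])\mathrm{Var}(y)}+|\rho_k|\tau/\mathrm{Var}(x[k])=O(\tau/(\beta\eps))$; forcing this below the $0.1\beta$ margin needs $\tau=O(\beta^2\eps)$ and hence $m=O(\log(n/\delta)/(\beta^4\eps^2))$, absorbing $\log(nd/\delta)=O(\log(n/\delta))$ from the union bound over all $nd$ tests. The one delicate point I would treat separately is the rejection direction when the conditioning domain of an idiosyncratic feature has tiny feature variance, where the ratio is unstable; I would handle it by observing that a small \emph{empirical} variance can be taken to certify $|\hat\rho^i_k|<\beta$ directly (equivalently, flooring the estimator), so a near-constant feature in some dataset still triggers rejection.

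Finally I would close with stage~(ii) and an overall union bound. Conditioned on correct recovery, Noiselessness guarantees the target $c$ satisfies $c(x[R])=y$ on the entire sample, so Step~5 never outputs \textit{FAIL} and returns some consistent $c\in\cC$. Because the $md$ restricted examples are iid and realizable by a class of VC dimension $VC(\cC)$, the standard realizable generalization bound makes every consistent hypothesis have error at most $\eps$ once $md=\Omega((VC(\cC)+\log(1/\delta))/\eps)$, which the first term of $m$ (with $d\geq 1$) supplies. Summing the failure probabilities of the imbalance estimate, the feature-recovery event, and the PAC generalization event gives total failure at most $\delta$, establishing that $\err_\rho(\text{FUD}_T)\leq\eps$ with probability $1-\delta$ and completing the proof.
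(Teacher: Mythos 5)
Your proposal is correct, and its top-level skeleton is the same as the paper's: dispose of class imbalance by Chernoff bounds, reduce the theorem to exact recovery $R=R^*$ of the robust feature set (robust features survive because Independence plus Noiselessness make their per-domain correlations equal the unconditional ones; idiosyncratic features are killed by Idiosyncrasy, a $0.9^d\le\delta/(10n)$ bound, and a union bound over features), and finish with a realizable VC bound on the pooled $md$ examples, which are iid in the coordinates $R^*$. Where you genuinely diverge is in how the correlation-concentration step is proved. The paper isolates this as Lemma \ref{lem:corr}: for binary variables with only the \emph{label} bounded away from degeneracy ($\E[S]\in[v,1-v]$), the empirical Pearson coefficient is $\eps$-accurate after $O(\eps^{-4}v^{-2}\log(1/\delta))$ samples, with no assumption at all on the feature's variance; its proof splits on whether the true feature mass is tiny, and in the tiny case shows both the true and empirical correlations are automatically small. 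You instead derive a variance floor $\mathrm{Var}(x[k])=\Omega(\beta^2\eps)$ from the Correlation assumption (your inequality $|\mathrm{Cov}(x[k],y)|\le 2\,\mathrm{Var}(x[k])$ is valid for binary variables) and run moment estimation with error propagation; this covers the selection direction and transparently explains the $\beta^{-4}\eps^{-2}$ rate, but it does not cover idiosyncratic features, whose variance has no floor. Your patch---that a small \emph{empirical} feature variance directly certifies a small empirical correlation---is sound for binary variables (the empirical correlation is at most the square root of the empirical feature mass over the product of the empirical label masses), so no "flooring" or modification of the algorithm is needed or appropriate; stated as a property of the existing estimator, it is essentially Case 1 of the paper's lemma. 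Do note, however, that your argument then has a three-way structure (robust features; idiosyncratic with moderate variance; idiosyncratic with tiny variance), and the middle case still needs the propagation bound rechecked with $|\rho|<0.9\beta$ and a variance threshold chosen with constants compatible with $\tau=\Theta(\beta^2\eps)$---this works, but it is the delicate part that the paper's single dichotomy on true feature mass avoids. In short: same decomposition, different and somewhat more fragile proof of the key concentration lemma, with the compensating benefit that your route makes the sample-complexity exponents self-explanatory.
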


\begin{proof}%[Proof of Theorem \ref{thm:FUD}]
Fix  $\rho \in FS(\cC, \beta)$. Note that by the noiseless and independent assumptions, the fraction of positives is the same in each domain, i.e., $\E[y|z]=\E[y]$. 
We first bound the failure probability of outputting the all 0 or all 1 classifier in the second step. However, if $\E[y]\geq \eps$, the probability that it outputs the all 0 classifier is at most $\delta/10$ by multiplicative Chernoff bounds  over $dm=\Omega(\frac{1}{\eps}\log \frac{1}{\delta})$ labeled examples. Similarly, if $\E[y]\leq 1-\eps$, the probability we output the all 1 classifier is at most $\delta/10$. Conversely, if $\E[y]<\eps/4$, then multiplicative Chernoff bounds also imply that with probability at least $1-\delta/10$, we will output the 0 classifier (and hence have error $<\eps$), and similarly if $\E[y]>1-\eps/4$.

Henceforth, let us assume $\E[y] \in [\eps/4,1-\eps/4]$. 

Next, note that the set $R$ described in the $FS$ assumption is uniquely determined for $\rho$. Call this set $R^*$. It suffices to show that  with probability at least $1-\delta/10$, $R=R^*$ for $R$ defined in the algorithm. This is because if $R=R^*$, by a standard VC bound of \cite{haussler1991equivalence}, since $x[R]$ is iid and the total number of examples observed is $dm=\Omega\left(\frac{VC(\cC)}{\eps}\log \frac{1}{\delta}\right)$, with probability at least $1-\delta/2$ the error is at most $\eps$ because learning of $(x[R], y)$ is standard PAC learning of $\cC$.

Using $\E[y] \in [\eps/4,1-\eps/4]$, Lemma \ref{lem:corr} below implies that $m=\Omega(\beta^{-4}\eps^{-2}\log(dn/\delta))$ examples suffice to estimate all $dn$ correlations accurately to within $0.1\beta$ with probability at least $1-\delta/10$. Assuming this happens, all $k\in R^*$ will necessarily also be in $R$. 

It remains to argue that with probability at least $1-\delta/10$, $R=R^*$. To see this, note that for each $k \not\in R^*$, the Idiosyncrasy assumption means that with probability at most $0.9^d \leq \delta/(10n)$ would there be no $k$ for which $|\rho_i^k|\leq 0.9\beta$. Hence, by a union bound, with probability at least $1-\delta/10$, there will be simultaneously for each $k \not\in R^*$ some dataset $i\in [d]$ such that $|\rho_i^k|\leq 0.9\beta$. Since we are assuming that all correlations are estimated correctly to within $0.1\beta$, it is straightforward to see that $R=R^*$.\end{proof}

We now bound the number of examples needed to estimate correlations.

\begin{lemma}\label{lem:corr}
For any jointly distributed binary random variables $(R,S) \in \{0,1\}^2$ with $\E[S] \in [v, 1-v]$, and for any $\eps,\delta >0$, the probability that the empirical correlation coefficient of $m\geq 2048 \eps^{-4} v^{-2} \log(8/\delta)$ iid samples differs by more than $\eps$ from the true correlation is at most $\delta$. 
\end{lemma}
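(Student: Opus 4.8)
The plan is to reduce everything to the concentration of the three elementary statistics $a=\E[R]$, $b=\E[S]$, $c=\E[RS]$, and then to show that the Pearson coefficient $\rho=(c-ab)/\sqrt{a(1-a)\,b(1-b)}$ is a stable enough function of them. The first step records that for binary variables $\var(R)=a(1-a)$ and $\var(S)=b(1-b)$, so the hypothesis $\E[S]\in[v,1-v]$ supplies the lower bound $\var(S)\ge v(1-v)\ge v/2$, whereas $\var(R)$ is entirely unconstrained; this asymmetry is the whole source of difficulty, since the denominator is well behaved in $b$ but can be arbitrarily small in $a$. Applying Hoeffding's inequality to each of the three $[0,1]$-valued empirical means and taking a union bound, all of $|\hat a-a|$, $|\hat b-b|$, $|\hat c-c|$ are at most $t$ except with probability at most $6e^{-2mt^2}$; taking $t=\Theta(\eps^2 v)$ and $m=\Theta(\eps^{-4}v^{-2}\log(1/\delta))$ drives this below $\delta$, and this is already where the quoted sample size $2048\,\eps^{-4}v^{-2}\log(8/\delta)$ originates. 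It then remains to prove the deterministic statement: whenever the three statistics are within $t=\Theta(\eps^2 v)$ of the truth, $|\hat\rho-\rho|\le\eps$.

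The engine of the deterministic step is a single inequality forcing $\rho$ to be small exactly when its denominator is small. I would prove $|\mathrm{Cov}(R,S)|=|\E[(R-a)(S-b)]|\le \E|R-a|=2a(1-a)=2\var(R)$ (using $|S-b|\le 1$), and hence $|\rho|\le 2\sqrt{\var(R)/\var(S)}\le 2\sqrt{a(1-a)/(v(1-v))}$; crucially, the identical inequality holds verbatim for the empirical distribution. Armed with this I split on a threshold $\theta=\Theta(\eps^2 v(1-v))$ for $\var(R)$. In the low-variance regime $a(1-a)<\theta$ (without loss of generality $a\le 1/2$, so $a\le 2\theta$), the inequality gives $|\rho|\le\eps/2$ outright; since $t\ll\theta$ we also get $\hat a(1-\hat a)=O(\theta)$ while $\hat b(1-\hat b)\ge v(1-v)-t$ stays bounded below, so the empirical version yields $|\hat\rho|\le\eps/2$ (adopting the convention $\hat\rho=0$ when $\hat a(1-\hat a)=0$), and the triangle inequality closes this case.

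In the high-variance regime $a(1-a)\ge\theta$ I would use the perturbation decomposition $|\hat\rho-\rho|\le |\hat N-N|/\hat D+|\rho|\,|D-\hat D|/\hat D$ with $N=c-ab$ and $D=\sqrt{a(1-a)b(1-b)}$. The numerator term is handled by $|\hat N-N|=O(t)$ together with $\hat D\ge\tfrac12\sqrt{\theta\,v(1-v)}$, giving $O(t/(\eps v))=O(\eps)$. For the denominator term the key is to bound the relative error \emph{multiplicatively}: $|D-\hat D|/\hat D=O(t/\var(R))+O(t/\var(S))$, because the relative errors of the two variances add. The summand $|\rho|\cdot t/\var(S)\le t/\var(S)=O(t/v)$ is harmless, and the dangerous summand $|\rho|\cdot t/\var(R)$ is tamed by substituting $|\rho|\le 2\sqrt{\var(R)/\var(S)}$, which collapses it to $O(t/\sqrt{\var(R)\,v(1-v)})$; this is largest at $\var(R)=\theta$, where it equals $O(t/(\eps v))=O(\eps)$. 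Summing the regimes, every contribution is $O(\eps)$ once $t=\Theta(\eps^2 v)$, which is the deterministic claim.

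I expect this denominator term in the high-variance regime to be the main obstacle. A careless estimate that bounds $|D^2-\hat D^2|$ additively by $O(t)$ and uses only $|\rho|\le 1$ produces a relative error of order $t/(\var(R)\var(S))\sim t/(\theta v)$ and hence the far worse sample complexity $\eps^{-6}v^{-4}$. Two separate ideas rescue the claimed exponent: (i) estimating the denominator error \emph{multiplicatively}, since the relative errors of the two variances add, makes it scale like $t/\var(R)$ rather than $t/(\var(R)\var(S))$ and removes a factor of $v$; and (ii) substituting the covariance bound $|\rho|\le 2\sqrt{\var(R)/\var(S)}$ lets a small correlation cancel the remaining ill-conditioning exactly in the regime where $\var(R)\to\theta$, removing a factor of $\eps^2$. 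With both in place every contribution is $O(\eps)$ once $t=\Theta(\eps^2 v)$, and I would finish by collecting constants to match $m\ge 2048\,\eps^{-4}v^{-2}\log(8/\delta)$, with $\log(8/\delta)$ absorbing the union bound over the estimation events.
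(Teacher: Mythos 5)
Your route is sound at the level of asymptotics and does establish the lemma with $m = O(\eps^{-4}v^{-2}\log(1/\delta))$, but it is organized differently from the paper's proof. The paper estimates the four \emph{cell probabilities} $q_{ij}=\Pr[R=i\wedge S=j]$ to within $\tau=\eps^2v/64$ (Chernoff plus a union bound over the four cells is exactly where $2048\,\eps^{-4}v^{-2}\log(8/\delta)$ comes from), and splits on $\Pr[R=1]=q_{10}+q_{11}$ rather than on $\var(R)$; for binary variables these splits are essentially the same, and the paper's Case~1 bound $|\rho|\le\max\bigl\{\sqrt{q_{10}/q_{00}},\sqrt{q_{11}/q_{01}}\bigr\}$ plays the role of your covariance bound $|\rho|\le 2\sqrt{\var(R)/\var(S)}$ (the paper's version is in fact a constant factor sharper). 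The real divergence is in the main case: instead of your quotient decomposition $|\hat\rho-\rho|\le|\hat N-N|/\hat D+|\rho|\,|D-\hat D|/\hat D$, the paper writes $\rho$ in the factorized form (\ref{eq:baz}) as a difference of two square roots of products of ratios such as $q_{00}/(q_{00}+q_{01})$ and perturbs each ratio multiplicatively via (\ref{eq:foo}). This yields $|\hat\rho-\rho|\le 4\tau/\sqrt{\var(R)\var(S)}$ with the \emph{true} denominator and with no appeal to any bound on $|\rho|$; combined with $\var(R)\ge\tau/2$ in that case, the clean constant $8\sqrt{\tau/v}=\eps$ falls out. Your route needs both of your ``rescuing ideas,'' and additionally needs the \emph{empirical} denominator $\hat D$ bounded below, which is exactly where your constants become delicate.

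That delicacy is the one genuine weak point: your claims ``$t\ll\theta$'' and ``collecting constants to match $2048$'' are in tension with each other. Hitting $2048$ via Hoeffding forces $t\approx\eps^2v/64$, while your low-variance case needs (even ignoring empirical corrections) $4\sqrt{\theta/(v(1-v))}\le\eps$, i.e.\ $\theta\le\eps^2v(1-v)/16$; together these give $t/\theta\ge 1/(4(1-v))\ge 1/4$, which is not $\ll 1$. With the bounds as you state them ($|\hat N-N|\le 3t$, $\hat D\ge\tfrac12\sqrt{\theta v(1-v)}$, additive control of $|D^2-\hat D^2|$), the high-variance contribution at the boundary $\var(R)=\theta$ then works out to roughly $1.3\,\eps$ rather than $\eps$ in the worst case $v=1/2$. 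The argument is repairable without changing its structure---take both means to be at most $1/2$ by the symmetries $R\mapsto 1-R$, $S\mapsto 1-S$, sharpen the numerator to $|\hat N-N|\le t(2+t)$, control $D/\hat D$ in product form $\sqrt{(1\pm t/\var(R))(1\pm t/\var(S))}$, and tune $\theta\approx\eps^2v(1-v)/22$---after which both regimes squeak under $\eps$ for $\eps\le 1$; but this is a careful calculation rather than bookkeeping, and it remains marginal for $\eps>1$, which the paper's constants do cover. So either carry out that calculation explicitly, or state the lemma with an unspecified constant: as written, your proof establishes Lemma~\ref{lem:corr} with $2048$ replaced by a larger constant, which is all that the application in Theorem~\ref{thm:FUD} actually requires.
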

The proof of this Lemma is given in Appendix \ref{ap:fssproof}.

\begin{table}
\caption{Data statistics \label{tab:Table1}}
\vskip 0.2cm
\begin{center}
 \begin{tabular}{|l| c| c| c|} 
 \hline
 {\bf Domain} & {\bf Pages} & {\bf Faculty proportion} & {\bf Bag density} ({\bf student pages}, {\bf faculty pages})
  \\ [0.5ex] 
 \hline\hline
 {Cornell} & 162 & 21\% & 23\% (22\%, 28\%) \\ 
 \hline
 {Texas} & 194 & 24\% & 23\% (23\%, 22\%) \\
 \hline
{Washington} & 157 & 20\% & 24\% (24\%, 20\%) \\
 \hline
 {Wisconsin} & 198 & 21\% & 23\% (21\%, 29\%) \\
 \hline\hline
 {Test} & 2,054 & 47\% & 21\% (22\%, 21\%) \\
 \hline
\end{tabular}
\end{center}
\end{table}

\section{Experiments}
\begin{figure}[!h]
    \centering
    \includegraphics[width=15cm]{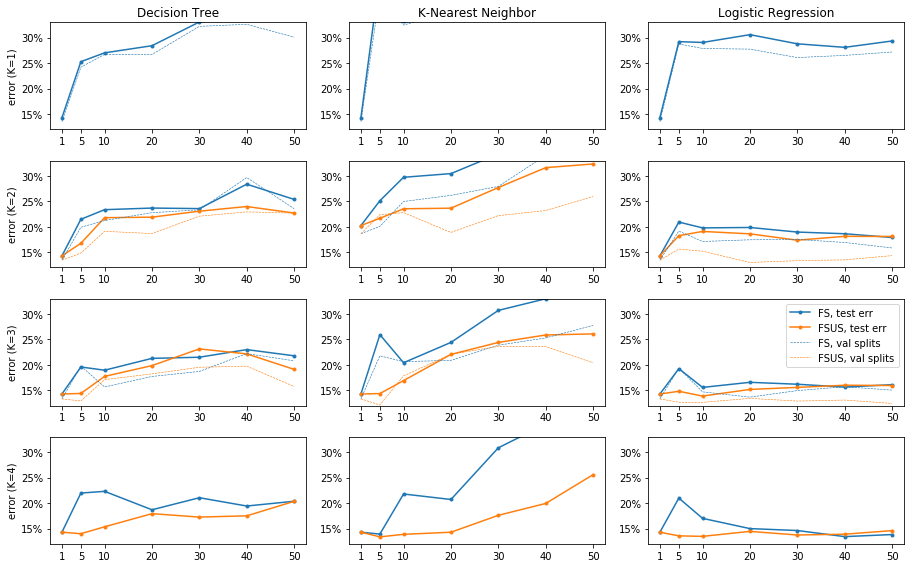}
    \caption{Balanced error rates on University data for varying number of selected features. FSUS is our algorithm, and the baseline is denoted by FS.}
    \label{fig:Figure1}
\end{figure}

\begin{figure}[!h]
    \centering
    \includegraphics[width=15cm]{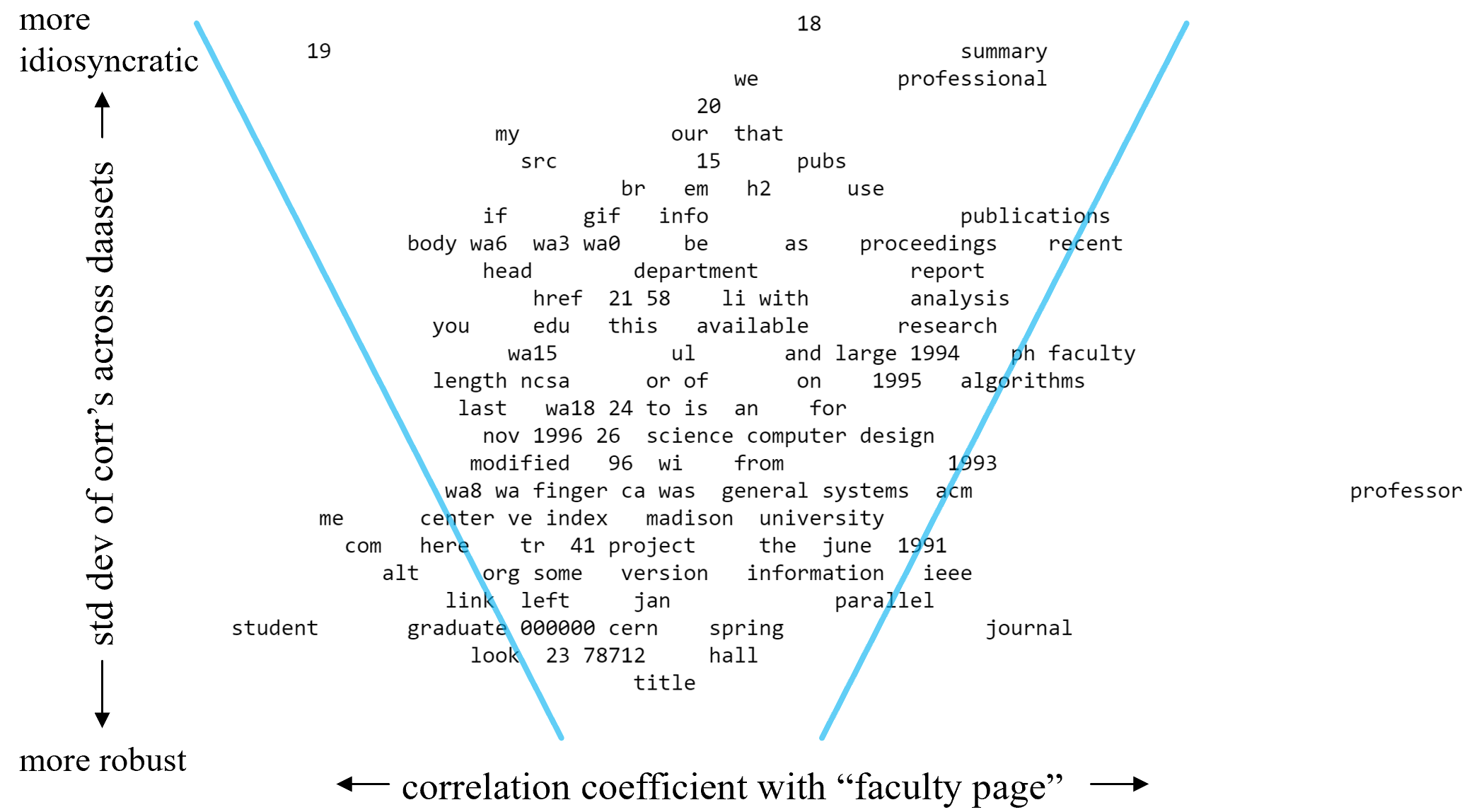}
    \caption{Correlations of words with faculty page ($x$-axis) vs the std.~dev.\ of correlations over universities. Words to right such as \textit{professor} and \textit{journal} correlate most strongly with faculty pages, while words to the left such as \textit{student} and \textit{19} correlate most with student pages. Words towards the bottom such as \textit{student} have robust correlations across universities while words towards the top such as \textit{19} are more idiosyncratic. (Interestingly, the token \textit{19} was found to consistently predict student because the web pages headers were included in the data and, coincidentally, the time of download of student web pages for some universities was 7pm.) The words selected are those outside the diagonal lines, where the slope of the line is determined by parameter $\alpha$, and the horizontal positions of the lines is determined by the number of words to be selected. }
    \label{fig:scatterplot}
\end{figure}

We conducted simple experiments to evaluate the quality of features selected by our methodology from Section~\ref{sec:feature-selection}. We experimented with the {\em Universities} data set,\footnote{\textbf{Available at}: http://www.cs.cmu.edu/afs/cs/project/theo-20/www/data/} a small dataset that is ideally suited for domain generalization. It conatins webpages from computer science departments of various universities, which can be identified by the url domain, e.g., \url{cornell.edu}. The data set is classified into categories such as faculty, student, course, etc.; we focused on the faculty and student classes for our experiments. Our training data pertains to 711 faculty and student webpages from four universities: Cornell, Texas, Washington, and Wisconsin.
 Our test data includes faculty and student pages from 100 universities. None of the four universities in our training set were represented in the test set. We represented each page 
as a bag-of-words, and preprocessed the data to remove all words that had less than 50 occurrences. As a result, we obtained a vocabulary of 547 unique words. Thus, we represented each page as a 547-dimensional binary vector:  each word that occurred at least once in the page had the corresponding coordinate set to 1. 

We summarize the statistics of our data in Table \ref{tab:Table1}. Note that we computed the bag density of a domain as the average of the mean vector pertaining to the binary vectors in the domain. The respective densities for student and faculty pages are also shown.   Note that the faculty proportion in test data (47\%) is about twice the proportion in any domain from the training data (where the fraction of faculty pages hovers around 20\%). Thus, investigating this data for domain generalization is a worthwhile exercise. \\

 We compare the performance of our algorithm with a standard feature baseline. Specifically, the baseline selects words whose Pearson correlation coefficient with the training labels (i.e., faculty or student) is high. We implemented a regularized version of our feature selection algorithm FUD that penalized those features that have large standard deviation (stdev) of the Pearson coefficient on the train domains. In other words, we computed scores $s_k = |\hat{\rho}_k| - \alpha~~ {\rm stdev}(\hat{\rho}_k^1, \ldots, \hat{\rho}_k^d)$, and selected the features $k$ that were found to have high $s_k$. We set the value of the regularization parameter $\alpha$ to 2.  We call our regularized algorithm FSUS. We trained several classifiers, namely, decision tree, K-nearest neighbor, and logisitic regression, on the features selected by each algorithm  (using default values of hyperparameters in the Python {\em sklearn} library). The performance of the algorithms was measured in terms of the standard  {\em balanced error rate}, i.e., the average of prediction error on each class.  Besides the performance on test data, we also show the mean validation error to estimate the generalization performance on domains in the training set.  Specifically, we first trained a separate classifier for each domain and measured its prediction error on the data from other domains in the training set, and then averaged these errors to compute the estimate of validation error, denoted by (K=1) in Figure \ref{fig:Figure1}. Likewise, for $K=2$, classifiers were trained on data from two domains at a time, and evaluated for performance on the other domains; similarly for $K \in \{3, 4\}$. As Figure \ref{fig:Figure1} illustrates, our algorithm generally outperformed the baseline method, for different numbers of selected features (horizontal axis) and for different $K$ across classifiers. Note that instead of fixing $\alpha$ beforehand, we could tune it based on the validation error. We found that performance of our algorithm deteriorated only slightly using the tuned $\alpha$. We omit the details for brevity.  These empirical findings substantiate our theoretical foundations, suggesting the benefits of domain generalization.    

Figure \ref{fig:scatterplot} shows a scatter-plot of the correlations of features, words in this instance, and robustness of this correlation across datasets. Interestingly, one of the most correlated features was the token \textit{19}, which was later discovered to be correlated in certain datasets simply because student webpages at certain universities were downloaded at 7pm, and the datafiles included header information which revealed the download times.  It is normally considered the job of a data scientist to decide to ignore features such as data collection time, but this illustrates how our algorithm identified this problem automatically using the idea of robustness across domains.

\section{Conclusions and Open Directions}

The goal of this paper is to suggest a simple theoretical model of domain generalization, and to demonstrate its power to obtain results that leverage access to multiple domains.

Even in settings where training data are not explicitly partitioned into domains, ideas from this work can potentially be helpful in developing algorithms that will be robust to unfamiliar data. One approach is to create splits of the training data based on clustering it or dividing it along settings of its variables, such that a domain expert believes that the resulting division into splits may be analogous to future changes in the data to be handled. (Some of the training data could even potentially be used to test out the usefulness of a candidate partition into splits.)

\bibliography{bibfile}

\appendix

\section{Proving bounds on correlation coefficients}\label{ap:fssproof}
This section includes the proof of Lemma \ref{lem:corr}.

\begin{proof}[Proof of Lemma \ref{lem:corr}]
Let $\rho$ and $\hat{\rho}$ be the correlation coefficient and empirical correlation of $R,S$ on a sample of size $m$. Let $q_{ij}=\Pr[R=i \wedge S=j]$ and $\hat{q}_{ij}$ be the corresponding realized empirical fractions over the $m$ samples. 

By Chernoff bounds, for any $i,j$, the probability that $|q_{ij}-\hat{q}_{ij}|> \tau$ is at most $2e^{-2m\tau^2}\leq \delta/4$ for $\tau=\eps^2 v/64$. Hence, with probability $\geq 1-\delta$, $|q_{ij}-\hat{q}_{ij}|\leq \tau$ for  and for all $i,j$. We now argue that if this happens, then $|\rho-\hat{\rho}|\leq \eps$. 

As shorthand, let $a=q_{00}, b=q_{01}, c=q_{10}, d=q_{11}$ and $\hat{a},\hat{b},\hat{c},\hat{d}$ be the analogous empirical quantities. It may be helpful for the reader to draw a 2x2 table of possible values of $R,S$ and associated probabilities.

Case 1: $c+d \leq \tau$. In this case we use $|\rho-\hat\rho|\leq |\rho|+|\hat\rho|$ and argue that both  $|\rho|,|\hat\rho|\leq 2\sqrt{\tau/v}\leq \eps/2$. To see this, the definition of correlation coefficient applied to binary random variables means that correlation can be written as
\begin{equation}\label{eq:corr}
\rho = \frac{a d-b c}{\sqrt{(a+b)(c+d)(a+c)(b+d)}},
\end{equation}
and similarly for $\hat\rho$. Since all quantities are non-negative, we can remove terms to get
$$-\sqrt{\frac{c}{a}}\leq \frac{-b c}{\sqrt{(b)(c)(a)(b)}}\leq \rho \leq \frac{a d}{\sqrt{(a)(d)(a)(b)}} = \sqrt{\frac{d}{b}},$$
and similarly for $\hat\rho$. In turn this implies that $|\rho|\leq \max \{\sqrt{c/a},\sqrt{d/b}\}$. Since $c+d \leq \tau$, we have that $c,d \leq \tau$ and since $\E[S]\in [v, 1-v]$ we have that $a+c,b+d\geq v$, in turn implying $a, b \geq v-\tau$. Hence, 
$$|\rho|\leq \max\left\{\sqrt{\frac{{c}}{{a}}}, \sqrt{{\frac{d}{b}}}\right\}\leq \sqrt\frac{\tau}{v-\tau}.$$
Similarly, for $\hat\rho$, we have
$$|\hat\rho| \leq \max\left\{\sqrt{\frac{\hat{c}}{\hat{a}}}, \sqrt{\frac{\hat{d}}{\hat{b}}}\right\}\leq \max\left\{\sqrt{\frac{c+\tau}{a-\tau}}, \sqrt{\frac{d+\tau}{b-\tau}}\right\} \leq \sqrt\frac{\tau+\tau}{v-\tau-\tau}\leq \sqrt\frac{2\tau}{v-2\tau}.$$
This upper bound is greater than the one we have for $|\rho|$. Hence,
$$|\rho-\hat\rho|\leq |\rho|+|\hat\rho| \leq 2 \sqrt\frac{2\tau}{v-2\tau}\leq2\sqrt\frac{2\tau}{v/2} = 4 \sqrt\frac\tau{v}\leq \eps.$$ 
In the above we have used the fact that $2\tau\leq v/2$.

Case 2: $c+d  \in [\tau, 1/2]$. We use the fact that, given that $|\hat{a}-a|, |\hat{b}-b|\leq \tau$, 
\begin{equation}\label{eq:foo}\frac{\hat{a}}{\hat{a}+\hat{b}} \leq \frac{a+\tau}{(a+\tau)+(b-\tau)} = \frac{a+\tau}{a+b},\end{equation}
because $\alpha/(\alpha+\beta)$ is increasing in $\alpha$ and decreasing in $\beta$. 
From (\ref{eq:corr}), one can see that
\begin{equation}\label{eq:baz} \rho = \sqrt{ \frac{a}{a+b}\cdot \frac{d}{c +d}\cdot \frac{a}{a+c}\cdot\frac{d}{b+d}}-\sqrt{ \frac{b}{a+b}\cdot \frac{c}{c +d}\cdot \frac{c}{a+c}\cdot\frac{b}{b+d}}.\end{equation}
The bounds on $\hat{a}/(\hat{a}+\hat{b})$ imply that
\begin{align}
\sqrt{ \frac{\hat{a}}{\hat{a}+\hat{b}}\cdot \frac{\hat{d}}{\hat{c} +\hat{d}}\cdot \frac{\hat{a}}{\hat{a}+\hat{c}}\cdot\frac{\hat{d}}{\hat{b}+\hat{d}}} &\leq \sqrt{ \frac{a+\tau}{a+b}\cdot \frac{d+\tau}{c +d}\cdot \frac{a+\tau}{a+c}\cdot\frac{d+\tau}{b+d}}\nonumber\\
&=\frac{(a+\tau)(d+\tau)}{\sqrt{(a+b)(c+d)(a+c)(b+d)}}\nonumber\\
&\leq \frac{a d + 2\tau}{\sqrt{(a+b)(c+d)(a+c)(b+d)}}.\label{eq:bar}
\end{align}
In the last step above, we used the fact that $\tau(a+d)\leq \tau$ since $a+d\leq 1$. Similarly to (\ref{eq:foo}), we have
$$\frac{\hat{a}}{\hat{a}+\hat{b}} \geq \frac{\max\{0,a-\tau\}}{a+b}.$$
Combining with a similar lower bound to (\ref{eq:bar}) gives
$$\left|\frac{\hat{a} \hat{d}}{\sqrt{(\hat{a}+\hat{b})(\hat{c}+\hat{d})(\hat{a}+\hat{c})(\hat{b}+\hat{d})}}-\frac{a d}{\sqrt{(a+b)(c+d)(a+c)(b+d)}}\right|\leq \frac{2\tau}{\sqrt{(a+b)(c+d)(a+c)(b+d)}}$$
Applying the same argument replacing $ad$ with $bc$ and substituting into (\ref{eq:baz}) gives
$$|\hat\rho -\rho| \leq \frac{4\tau}{\sqrt{(a+b)(c+d)(a+c)(b+d)}}.$$
By assumption $c+d\leq a+b$ hence $a+b\geq 1/2$ and $(a+b)(c+d)\geq \tau/2$, and since $\E[S]=b+d\in [v,1-v]$, we have $(a+c)(b+d) \geq v/2$. Combining with the above gives $|\hat\rho -\rho| \leq 8\sqrt{\tau/v}=\eps.$

Case 3: $c+d\geq 1/2$. Replacing $R$ by $1-R$ negates $\rho$ and also negates $\hat\rho$. This transformation swaps $a$ with $c$ and $b$ with $d$ but preserves $|\rho-\hat\rho|$. Hence, we can use the prior two cases which cover $c+d \leq 1/2$.
\end{proof}

\end{document}